\documentclass{article}
\usepackage{ICASSP2027_Paper_Templates/spconf,amsmath,amssymb,amsthm,graphicx,booktabs,array,float}

\usepackage{xcolor}
\floatstyle{ruled}
\newfloat{algorithm}{tbp}{loa}
\floatname{algorithm}{Algorithm}
\newenvironment{algorithmic}[1][0]{\begin{list}{}{\setlength{\leftmargin}{0pt}\setlength{\itemsep}{1pt}\setlength{\parsep}{0pt}\setlength{\topsep}{0pt}}}{\end{list}}
\newcommand{\REQUIRE}{\item[]\textbf{Input:}}
\newcommand{\ENSURE}{\item[]\textbf{Output:}}
\newcommand{\STATE}{\item[]}
\newcommand{\FOR}[1]{\item[]\textbf{for} #1 \textbf{do}\begin{list}{}{\setlength{\leftmargin}{1.2em}\setlength{\itemsep}{1pt}\setlength{\parsep}{0pt}\setlength{\topsep}{0pt}}}
\newcommand{\ENDFOR}{\end{list}\item[]\textbf{end for}}

\newcommand{\R}{\mathbb{R}}
\newcommand{\w}{\mathbf{w}}
\newcommand{\p}{\mathbf{p}}
\newcommand{\e}{\mathbf{e}}
\newcommand{\loss}{\boldsymbol{\ell}}

\theoremstyle{definition}
\newtheorem{assumption}{Assumption}
\theoremstyle{plain}
\newtheorem{proposition}{Proposition}

\title{Preference-Aware Pareto Steering for Multi-Task Speech LLMs}
\name{\small Liuyuan Jiang$^{1}$, Vishal Sunder$^{2}$, Samuel Thomas$^{2}$, Brian Kingsbury$^{2}$, Jeff Kuo$^{2}$, Lisha Chen$^{1}$, and George Saon$^{2}$}
\address{$^{1}$Department of Electrical and Computer Engineering, University of Rochester, Rochester, NY, USA\\
$^{2}$IBM Research, Thomas J. Watson Research Center, Yorktown Heights, NY, USA}

\begin{document}
\ninept
\maketitle
\begin{abstract}
Speech large language models (LLMs) perform multiple tasks, including automatic speech recognition (ASR), timestamp prediction (TS), and speech translation (ST). Joint training of speech LLMs creates task trade-offs, yet standard training with fixed loss weights does not generally control the resulting trade-off according to deployment preferences. To address this, we propose Endpoint-Anchored Preference-Aware Sampling Uniformly along the Pareto Front (EA-SURF), an a priori method that adapts scalarization weights so that the resulting task trade-off reflects the desired preference. EA-SURF retains the standard weighted-loss training pipeline, tracking the current model's performance relative to each task-specific reference and adapting the scalarization weights as training proceeds. We theoretically show that EA-SURF can match a desired preference, with the update becoming stationary at the target under checkable regularity conditions. Across six three-objective speech LLM requests, EA-SURF empirically converges toward the requested targets and increases endpoint-inclusive normalized hypervolume by 17.8\% over fixed loss weighting.
\end{abstract}
\begin{keywords}
Multi-objective optimization, speech recognition, timestamp prediction, speech translation
\end{keywords}

\section{Introduction}
\label{sec:intro}
Large language models (LLMs) have expanded beyond text to process and generate information across modalities. Speech LLMs combine acoustic representations with pretrained LLMs to support diverse speech and audio capabilities~\cite{tang2024salmonn,hu2024wavllm}. Existing systems span speech recognition and translation~\cite{ao2022speecht5,radford2023robust}, while recent models extend this interface to auditory instruction following and fine-grained capabilities such as word-level timing~\cite{tang2024salmonn,fan2026sync}. These developments motivate general-purpose speech LLMs that consolidate multiple capabilities in a shared model.

Training these models requires balancing multiple task objectives~\cite{saif2024m2asr}. Multi-objective optimization (MOO) provides a principled formulation~\cite{mahapatra2020multi,chen2024ferero,saif2024m2asr}:
\begin{equation}
\min_{\theta}\ \loss(\theta)
:=\big(\ell_1(\theta),\ldots,\ell_M(\theta)\big)\in\R^M,
\label{eq:moo}
\end{equation}
where $\theta$ denotes the trainable parameters and $M$ the number of objectives. MOO seeks \emph{Pareto-optimal} solutions, for which no objective can be improved without worsening another~\cite{miettinen1999nonlinear}. Their objective values form the \emph{Pareto front} (PF) of attainable trade-offs. For $m\in[M]$, let $\mathcal D_m$ denote its sampling distribution and $\mathcal L_m$ its per-sample objective, giving $\ell_m(\theta):=\mathbb E_{\xi\sim\mathcal D_m}[\mathcal L_m(\theta,\xi)]$. This formulation covers supervised losses and, after sign reversal, expected rewards estimated from sampled trajectories~\cite{jiang2026surf} in reinforcement learning problems.

Existing speech systems acquire multiple capabilities through staged adaptation, mixed-task supervision, or prescribed training recipes~\cite{tang2024salmonn,fan2026sync}. While effective, these approaches do not provide calibrated control over the attained trade-off. Preference-guided MOO distinguishes \emph{a priori} steering, where a known deployment preference specifies an operating point~\cite{mahapatra2020multi,chen2024ferero}, from \emph{a posteriori} optimization, which generates Pareto solutions for later selection~\cite{miettinen1999nonlinear,navon2021learning}. The former suits known deployment priorities, whereas the latter supports priorities chosen after training~\cite{miettinen1999nonlinear}. For bi-objective \emph{a posteriori} coverage, SURF~\cite{jiang2026surf} calibrates scalarization weights using PF geometry. Calibrating a prescribed preference for general $M\geq2$ objectives remains underexplored.

\begin{figure}[t]
\centering
\includegraphics[width=0.8\linewidth]{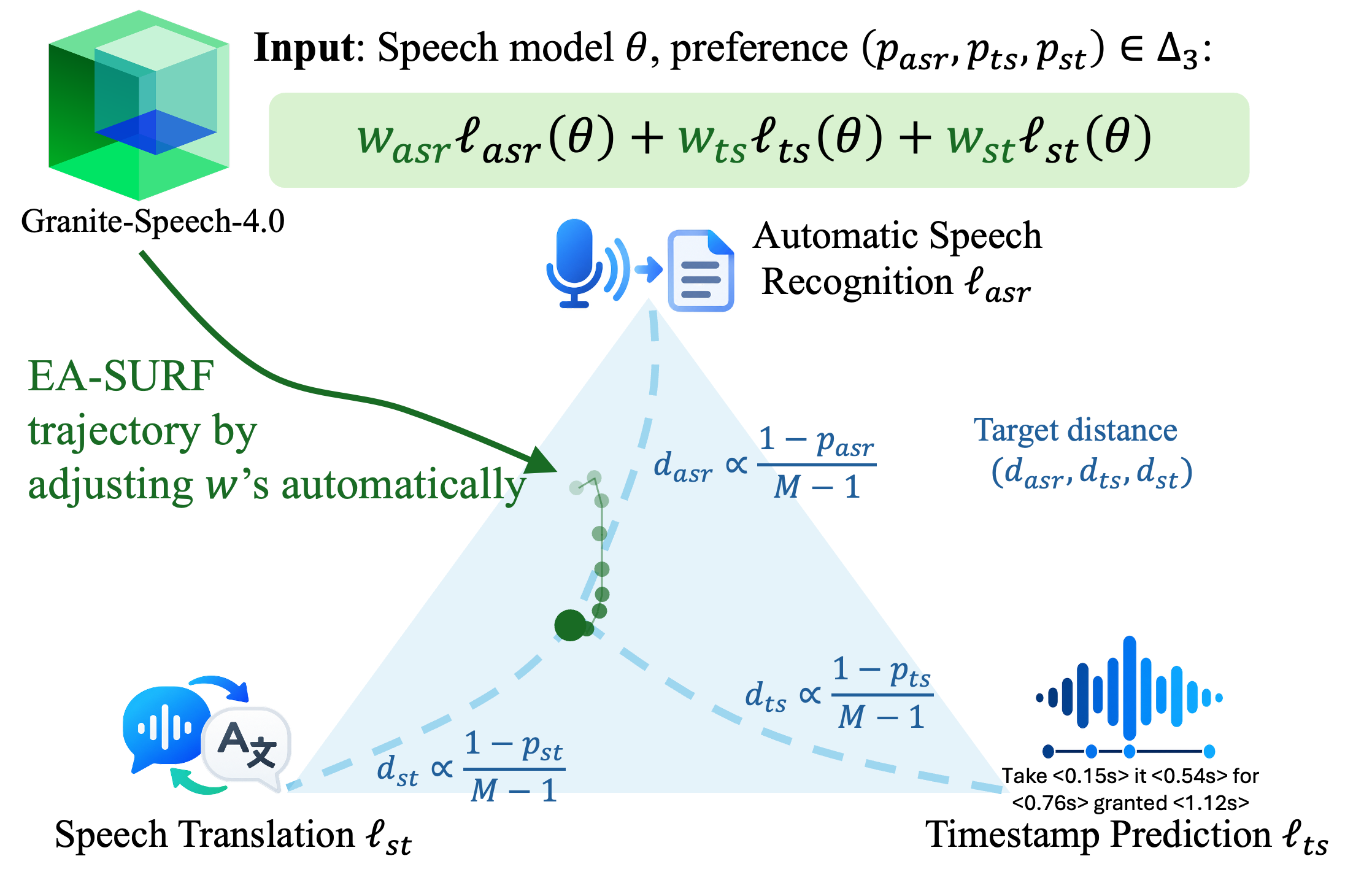}
\vspace{-0.2cm}
\caption{EA-SURF preference steering for ASR, timestamp prediction (TS), and speech translation (ST). Specialist models anchor the objective-space geometry, while a deployment preference $\p$ is mapped by \eqref{eq:target-share} to target specialist-relative distance shares $\mathbf q^\star(\p)$. EA-SURF iteratively adapts the scalarization weights $\w$ from the measured share mismatch and continues training from the current model toward the requested operating point.
}
\label{fig:controller}
\end{figure}

We propose EA-SURF for \emph{a priori} preference calibration. It uses the standard scalarization interface while adapting its training weights from measured PF geometry. Task sampling selects observations for each objective estimate, while scalarization determines how the objectives update the shared model. A standard choice is linear scalarization (LS)~\cite{miettinen1999nonlinear}:
\vspace{-0.2cm}
\begin{align}
\theta^\star(\w)\in\arg\min_\theta~\{\mathcal L_{\mathrm{LS}}(\theta,\w)=\sum_{m=1}^{M}w_m\ell_m(\theta)\}
\label{eq:ls}
\end{align}
where $\w\in\Delta_M$ weights the training objectives. Setting $\w=\p$ for a deployment preference $\p\in\Delta_M$ is natural, but it does not generally ensure the corresponding trade-off~\cite{miettinen1999nonlinear,jiang2026surf}. Indeed, uniformly spaced weights can produce operating points clustered in some regions of objective space~\cite{jiang2026surf}. Here, $\p$ encodes relative task priority rather than prescribed WER, AAS, or BLEU values. EA-SURF realizes it through target proportions of specialist-relative training-objective distances.

In the speech application of Fig.~\ref{fig:controller}, $M=3$, with automatic speech recognition (ASR), timestamp prediction (TS), and speech translation (ST), so $\loss(\theta)=(\ell_{\rm asr}(\theta),\ell_{\rm ts}(\theta),\ell_{\rm st}(\theta))$ and $\w=(w_{\rm asr},w_{\rm ts},w_{\rm st})$. For example, the deployment preference $(p_{\rm asr},p_{\rm ts},p_{\rm st})=(.4,.4,.2)$ prioritizes ASR and TS, yet the same proportions used as loss weights need not realize the intended compromise.
EA-SURF addresses this mismatch by fixing $\p$, adapting $\w$ from objective-space feedback, and continuing the same warm-started probe. Motivated by preference-ray Pareto representations~\cite{mahapatra2020multi}, we assign more important objectives smaller target distances from their specialists:
\begin{equation}
d_m(\theta):=\|\loss(\theta)-\e_m\|_2
\propto
q_m^{\star}(\p):=
\frac{1-p_m}{\sum_{j=1}^{M}(1-p_j)}.
\label{eq:target-share}
\end{equation}
where $\e_m=(e_{m,1},\ldots,e_{m,M})$ denotes the measured loss vector of a model trained predominantly for objective $m$. EA-SURF normalizes the measured distances into attained shares and compares them with the target shares $\mathbf q^\star(\p)$. This request maps to $(q_{\rm asr}^\star,\allowbreak q_{\rm ts}^\star,\allowbreak q_{\rm st}^\star)=(.3,.3,.4)$. Independent runs over a preference grid also yield multiple calibrated points for \emph{a posteriori} use. Our contributions are:
\begin{itemize}
\item[C1)] \textbf{Preference realizability theory.} Under regularity conditions, we show that every interior preference target can be matched by a unique scalarization weight, yielding a Pareto-optimal solution (Sec.~\ref{sec:formulation}).
\item[C2)] \textbf{Closed-loop algorithm.} EA-SURF adapts one warm-started probe using reusable specialists, with interior fixed points exactly matching the requested target shares (Sec.~\ref{sec:easurf}).
\item[C3)] \textbf{Speech experimental results.} Across six requests, EA-SURF improves preference alignment, lowers mean timestamp error from 46.6 to 39.2~ms, and raises normalized hypervolume by 17.8\% over fixed LS. Table~\ref{tab:main} reports the associated WER, AAS, and BLEU trade-offs (Sec.~\ref{sec:experiments}).
\end{itemize}

\noindent\textbf{Notation.} For $M\in\mathbb N_+$, let $[M]:={1,\ldots,M}$ and $\Delta_M:={\mathbf v\in\R_+^M:\sum_{m=1}^M v_m=1}$. Bold lowercase letters denote vectors. $\|\cdot\|_1$ and $\|\cdot\|_2$ are the $\ell_1$ and Euclidean norms. For a set $A$, $\operatorname{relint}(A)$ and $\partial A$ denote its relative interior and boundary. Subscripts index objectives, parenthesized superscripts index outer rounds, and $\star$ denotes optimal or target quantities.

\section{Related Work}
\label{sec:related}

\noindent\textbf{Unified and multi-task speech learning.} Modern speech models increasingly place several capabilities in one shared architecture. SpeechT5 uses multi-task pretraining~\cite{ao2022speecht5}, Whisper uses large-scale multitask supervision~\cite{radford2023robust}, and SALMONN and WavLLM use cross-modal alignment and instruction tuning to broaden speech and audio interfaces~\cite{tang2024salmonn,hu2024wavllm}. In-Sync adds timestamp prediction through task conditioning and timestamp regularization~\cite{fan2026sync}. More explicitly multi-objective speech work includes M2ASR, which optimizes multilingual ASR and ST under unsupervised-performance constraints~\cite{saif2024m2asr}, and Objective Soups, which combines conflict-aware gradients for multilingual ASR and ST~\cite{saif2026objective}. These approaches focus on shared architectures, supervision, or joint optimization. EA-SURF instead uses attained specialist-relative shares to calibrate one fixed deployment preference during continued adaptation.

\vspace{0.1cm}
\noindent\textbf{Multi-task and preference-aware optimization.} Multi-task optimization also acts through controls distinct from a requested Pareto location. PCGrad modifies task gradients to reduce interference~\cite{yu2020gradient}, whereas PiKE adapts task proportions from gradient interactions~\cite{li2026pike}. These methods respectively alter the loss update and data exposure. In our formulation, the sampler remains fixed while EA-SURF adapts scalarization weights from objective-space feedback. Fixed LS is therefore the main experimental control because EA-SURF retains the same weighted-loss, single-model training interface, whereas gradient-combination approaches require per-objective gradient calculations not used here. Preference-guided MOO provides complementary ways to state a desired compromise. EPO targets a loss-space ray~\cite{mahapatra2020multi}, and FERERO encodes relative or absolute constraints~\cite{chen2024ferero}. For \emph{a posteriori} selection, 
Pareto HyperNetworks learn preference-conditioned solution families~\cite{navon2021learning}, while SURF uses arc-length geometry to improve $M=2$ PF coverage~\cite{jiang2026surf}. EA-SURF instead calibrates the scalarization weights for one known request by comparing attained specialist-relative shares with the request-derived target along the same continuing probe.

\section{EA-SURF Method}

\subsection{Preference-Steering Geometry}
\label{sec:formulation}

The deployment preference $\p$ specifies the requested trade-off. The scalarization weight $\w$ controls training, and the two need not coincide. We characterize the operating point attained by $\w$ through the specialist distances in \eqref{eq:target-share}. For $\sum_jd_j(\theta)>0$, let $q_m(\theta):=d_m(\theta)/\sum_{j=1}^{M}d_j(\theta)$ and $\mathbf q(\theta):=(q_1(\theta),\ldots,q_M(\theta))$. Thus, $\mathbf q(\theta^\star(\w))$ records the attained position relative to the specialists. The target $\mathbf q^\star(\p)$ in \eqref{eq:target-share} is the unique affine, permutation-equivariant, specialist-consistent mapping from preferences to distance shares. For a pure preference for objective $m$, specialist consistency assigns zero share to its own specialist. Symmetry and simplex normalization assign $1/(M-1)$ to the remaining specialists, and affinity gives $q_m^\star(\p)=(1-p_m)/(M-1)$. Preference steering therefore seeks $\w^\star\in\operatorname{relint}(\Delta_M)$ such that $\mathbf q(\theta^\star(\w^\star))=\mathbf q^\star(\p)$. The following assumption makes this correspondence well posed.

\begin{assumption}[Regular preference matching]
\label{ass:pf-regularity}
The objectives admit a hidden representation satisfying the hidden-convexity, smoothness, and PF nondegeneracy conditions in~\cite{miettinen1999nonlinear,jiang2026surf}. The attained shares $\mathbf q(\theta^\star(\w))$ are continuously differentiable in $\w\in\operatorname{relint}(\Delta_M)$ with Jacobian rank $(M-1)$. For each interior preference considered, $\|\mathbf q(\theta^\star(\w))-\mathbf q^\star(\p)\|_2^2$ is strictly convex on $\Delta_M$ and attains its minimum in $\operatorname{relint}(\Delta_M)$.
\end{assumption}

Assumption~\ref{ass:pf-regularity} gives sufficient conditions for a unique preference match. Hidden convexity and PF regularity support scalarization, differentiability and full rank ensure that the attained preference responds locally to changes in scalarization weights, and strict convexity with interior attainment ensures uniqueness. Cross-entropy is convex in the output logits and, under a fixed linear output head, in the final hidden representation, despite being generally nonconvex in the full model parameters~\cite{jiang2026surf}. We empirically examine whether the resulting trajectories remain well behaved and whether multiple mismatch minima arise in Sec.~\ref{sec:experiments}.

\begin{proposition}[Preference realizability]
\label{prop:surjectivity}
Under Assumption~\ref{ass:pf-regularity}, every $\p\in\operatorname{relint}(\Delta_M)$ under consideration admits a unique $\w^\star\in\operatorname{relint}(\Delta_M)$ such that $\mathbf q(\theta^\star(\w^\star))=\mathbf q^\star(\p)$. Consequently, $\theta^\star(\w^\star)$ is Pareto optimal and satisfies $d_m(\theta^\star(\w^\star))\propto (1-p_m)$, $\forall m\in[M]$.
\end{proposition}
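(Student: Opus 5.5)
The plan is to view preference matching as the minimization of the mismatch function
\[
F_{\p}(\w):=\|\mathbf q(\theta^\star(\w))-\mathbf q^\star(\p)\|_2^2
\]
over $\w\in\Delta_M$, and then to show that its unique minimizer is an exact match. First, for a fixed $\p\in\operatorname{relint}(\Delta_M)$, note that $\mathbf q^\star(\p)\in\operatorname{relint}(\Delta_M)$, since every $1-p_m\in(0,1)$. Assumption~\ref{ass:pf-regularity} makes $F_{\p}$ strictly convex on $\Delta_M$ with its minimum attained in $\operatorname{relint}(\Delta_M)$. Strict convexity then gives a unique minimizer $\w^\star$, and it lies in the relative interior.

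Second, I will show $F_{\p}(\w^\star)=0$ using first-order optimality on the relative interior. Since $\w^\star$ is an interior minimizer of a $C^1$ function on the affine hull $\{\mathbf 1^\top\w=1\}$, the projected gradient vanishes: $J(\w^\star)^\top\mathbf r=0$ on tangent directions. Here $J$ is the Jacobian of $\w\mapsto\mathbf q(\theta^\star(\w))$ restricted to the tangent space $T=\{\mathbf v:\mathbf 1^\top\mathbf v=0\}$, and $\mathbf r=\mathbf q(\theta^\star(\w^\star))-\mathbf q^\star(\p)$. Both $\mathbf q(\cdot)$ and $\mathbf q^\star(\p)$ lie in $\Delta_M$, so $\mathbf r\in T$. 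The range of $J$ also lies in $T$, because the shares always sum to one. The rank-$(M-1)$ hypothesis means $J:T\to T$ is a bijection, so $J^\top$ restricted to $T$ is injective. Therefore $\mathbf r=0$.

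This is the main obstacle. Strict convexity and interior attainment alone only guarantee a minimizer, not a zero of the mismatch. The argument needs the full-rank Jacobian together with the observation that both the residual and the image of $J$ live in the same $(M-1)$-dimensional tangent space, so that stationarity forces the residual to vanish. I will be careful to use the Jacobian intrinsic to $T$, not the ambient $M\times M$ one. Uniqueness of the exact match then follows: any $\w$ with $\mathbf q=\mathbf q^\star(\p)$ has $F_{\p}(\w)=0$, so it is a minimizer, and hence equals $\w^\star$ by strict convexity.

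Finally, I will derive the consequences. Because $\w^\star\in\operatorname{relint}(\Delta_M)$ has strictly positive entries, $\theta^\star(\w^\star)$ minimizes a strictly positive weighted sum of the objectives. The standard linear-scalarization result~\cite{miettinen1999nonlinear} then gives Pareto optimality: a dominating point would strictly lower the weighted sum. The hidden-convexity and nondegeneracy conditions of Assumption~\ref{ass:pf-regularity} ensure that this minimizer is well defined. Lastly, $\mathbf q=\mathbf q^\star(\p)$ means $d_m/\sum_j d_j=(1-p_m)/\sum_j(1-p_j)$ for every $m$, which is exactly $d_m(\theta^\star(\w^\star))\propto 1-p_m$. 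Here $\sum_j d_j>0$ holds because the shares are defined at $\theta^\star(\w^\star)$.
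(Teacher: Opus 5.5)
Your proposal is correct and follows essentially the same route as the paper's proof. Strict convexity and interior attainment give a unique interior minimizer of the squared mismatch. Stationarity makes the zero-sum residual orthogonal to the image of the rank-$(M-1)$ share Jacobian, which fills the simplex tangent space, so the residual vanishes. Uniqueness then comes from strict convexity, and Pareto optimality from the strictly positive weights in \eqref{eq:ls}. Your intrinsic treatment of $J$ as a map $T\to T$ and your direct dominance argument for Pareto optimality only make precise steps the paper states tersely.
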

\begin{proof}
Assumption~\ref{ass:pf-regularity} gives a unique interior minimizer of the squared mismatch. First-order optimality makes its residual orthogonal to the directions generated by the attained-share Jacobian. Both share vectors lie in $\Delta_M$, so their residual has zero sum. The rank-$(M-1)$ Jacobian spans this $(M-1)$-dimensional simplex tangent space, which forces the residual to be zero. Strict convexity gives the unique matching weight. Hidden convexity and positive interior weights make the associated LS solution locally unique and Pareto optimal~\cite{miettinen1999nonlinear,jiang2026surf}.
\end{proof}

Proposition~\ref{prop:surjectivity} establishes existence and uniqueness of the matching supported solution under Assumption~\ref{ass:pf-regularity}. It does not establish convergence of the finite-horizon adaptive update.

\subsection{EA-SURF: Adaptive Scalarization from Preference}
\label{sec:easurf}

\begin{algorithm}[t]
\caption{EA-SURF for one deployment request}
\label{alg:easurf}
\begin{algorithmic}[1]
\REQUIRE Preference $\p$, anchors ${\e_m}_{m=1}^{M}$, warm start $\theta^{(0)}$, gain $\alpha$, optimizer $\mathrm{TrainStep}$, budgets ${K_k}_{k=0}^{T-1}$. Set $\mathbf q^\star(\p)$ by \eqref{eq:target-share} and $\w^{(0)}\gets\p$
\FOR{$t=0,\ldots,T-1$}
\STATE Measure $\loss(\theta^{(t)})$, compute $d_m(\theta^{(t)})$, and normalize to $\mathbf q(\theta^{(t)})$
\STATE Form the current-to-target share ratios and update $\w^{(t+1)}$ by \eqref{eq:easurf}
\STATE Set $\theta^{(t,0)}\gets\theta^{(t)}$
\FOR{$k=0,\ldots,K_t-1$}
\STATE $\theta^{(t,k+1)}\gets\mathrm{TrainStep}(\theta^{(t,k)},\w^{(t+1)})$
\ENDFOR
\STATE Set $\theta^{(t+1)}\gets\theta^{(t,K_t)}$
\ENDFOR
\ENSURE Terminal probe $\theta^{(T)}$
\end{algorithmic}
\end{algorithm}

Proposition~\ref{prop:surjectivity} establishes a unique supported realization of the target shares. EA-SURF seeks this realization by adapting LS weights from observed share mismatch while continuing a single warm-started probe. At outer iteration $t$, the preference calibration error (PCE), given by $\operatorname{PCE}^{(t)}(\p):=\|\mathbf q(\theta^{(t)})-\mathbf q^{\star}(\p)\|_1$, measures the mismatch between attained and target shares.
Starting from $t=0$ with $\w^{(0)}=\p$, EA-SURF forms $r_m^{(t)}:=(q_m(\theta^{(t)})+\varepsilon_{\mathrm U})/(q_m^{\star}(\p)+\varepsilon_{\mathrm U})$ for $m\in[M]$ and applies the exponentiated, simplex-preserving update~\cite{kivinen1997exponentiated}
\begin{equation}
w_m^{(t+1)}:=\frac{w_m^{(t)}\big(r_m^{(t)}\big)^{\alpha}}
{\sum_j w_j^{(t)}\big(r_j^{(t)}\big)^{\alpha}},
\label{eq:easurf}
\end{equation}
where $0<\alpha\leq1$ and $\varepsilon_{\mathrm U}>0$ keeps the ratios finite. When $q_m(\theta^{(t)})/q_m^\star(\p)>1$, the probe is relatively too far from specialist $m$. EA-SURF therefore increases the relative LS weight on objective $m$, with the intended effect of moving the next continuation closer to that specialist. The same probe then trains for $K_t$ steps under the updated weights, where $\mathrm{TrainStep}(\theta,\w)$ denotes one optimizer update on the weighted objective $\mathcal L_{\mathrm{LS}}(\theta,\w)$ in~\eqref{eq:ls}.
Algorithm~\ref{alg:easurf} summarizes this closed-loop procedure. Each request uses one warm-started probe and reuses the $M$ specialist anchors, avoiding neighboring simplex probes or an explicit PF mesh. Computing $M$ anchor distances and updating $M$ weights adds negligible overhead relative to the $K_t$ training steps.

\begin{figure}[!t]
\centering
\includegraphics[width=0.85\linewidth]{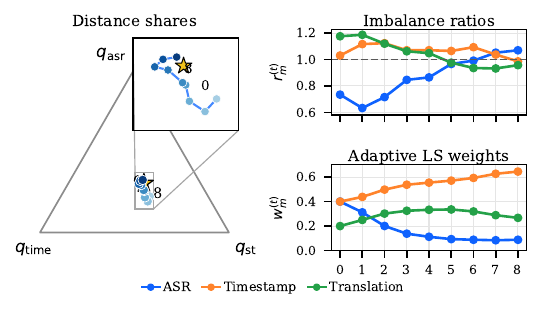}
\vspace{-0.6cm}
\caption{EA-SURF dynamics for $\p=(.4,.4,.2)$. Left: attained shares in the simplex. Upper right: current-to-target ratios. Lower right: the resulting adaptive LS weights. The gold star marks $\mathbf q^\star(\p)=(.3,.3,.4)$, and the dashed line marks ratio one.}
\label{fig:easurf-dynamics}
\vspace{-0.3cm}
\end{figure}

Figure~\ref{fig:easurf-dynamics} illustrates the feedback dynamics: measured shares move toward their targets and imbalance ratios move toward one, while LS weights depart substantially from the original preference $\p$. Thus, matching the requested trade-off can require training weights different from the preference itself. The next result characterizes the update's fixed point.

\begin{proposition}[Target consistency]
\label{prop:target-consistency}
For $\w^{(t)}\in\operatorname{relint}(\Delta_M)$ and $\alpha>0$, the update in \eqref{eq:easurf} satisfies $\w^{(t+1)}=\w^{(t)}$ if and only if $\mathbf q(\theta^{(t)})=\mathbf q^\star(\p)$.
\end{proposition}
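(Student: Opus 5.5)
The plan is to reduce the fixed-point condition to an equality of all the ratios $r_m^{(t)}$, and then to use the fact that both share vectors lie on the simplex. Write $S:=\sum_j w_j^{(t)}(r_j^{(t)})^{\alpha}$. This is strictly positive, because $\w^{(t)}\in\operatorname{relint}(\Delta_M)$ and every $r_j^{(t)}>0$: indeed $q_j\ge 0$, $q_j^\star>0$ for interior $\p$, and $\varepsilon_{\mathrm U}>0$. Since every $w_m^{(t)}>0$, the condition $w_m^{(t+1)}=w_m^{(t)}$ for all $m$ is equivalent to $(r_m^{(t)})^{\alpha}=S$ for all $m$. Because $x\mapsto x^{\alpha}$ is injective on $(0,\infty)$ when $\alpha>0$, this is in turn equivalent to all ratios sharing a common value, $r_m^{(t)}=c>0$ for every $m\in[M]$.

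For the ``if'' direction, suppose $\mathbf q(\theta^{(t)})=\mathbf q^\star(\p)$. Then every $r_m^{(t)}=1$, so $S=\sum_m w_m^{(t)}=1$, and \eqref{eq:easurf} returns $w_m^{(t+1)}=w_m^{(t)}$ for each $m$.

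For the ``only if'' direction, suppose the weights are fixed, so $r_m^{(t)}=c$ for all $m$. Unwinding the definition of the ratio gives
\begin{equation*}
q_m(\theta^{(t)})+\varepsilon_{\mathrm U}=c\,\big(q_m^\star(\p)+\varepsilon_{\mathrm U}\big)
\end{equation*}
for every $m\in[M]$. Summing over $m$, and using $\sum_m q_m=\sum_m q_m^\star=1$, yields $1+M\varepsilon_{\mathrm U}=c\,(1+M\varepsilon_{\mathrm U})$. Hence $c=1$, and therefore $q_m(\theta^{(t)})=q_m^\star(\p)$ for every $m$.

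The only delicate point is making the simplex normalization of both share vectors explicit. For $\mathbf q(\theta^{(t)})$ this holds by construction whenever $\sum_j d_j(\theta^{(t)})>0$, which is the standing assumption under which $\mathbf q$ is defined. For $\mathbf q^\star(\p)$ it holds by \eqref{eq:target-share}. The same summation also shows why $\varepsilon_{\mathrm U}$ does not spoil exactness: the smoothing is added uniformly to numerator and denominator, so equal ratios still force $c=1$.
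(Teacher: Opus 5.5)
Your proposal is correct and follows essentially the same route as the paper's proof: you reduce the fixed-point condition to all ratios $r_m^{(t)}$ sharing a common value $c>0$, then sum $q_m+\varepsilon_{\mathrm U}=c\,(q_m^\star+\varepsilon_{\mathrm U})$ over $m$ using the simplex normalization of both share vectors to force $c=1$, with the converse immediate from \eqref{eq:easurf}. Your added details are minor: positivity of the normalizer, injectivity of $x\mapsto x^{\alpha}$, and the explicit sum $1+M\varepsilon_{\mathrm U}$.
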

\begin{proof}
At an interior fixed point, all $(r_m^{(t)})^\alpha$, and hence all $r_m^{(t)}$, equal some $c>0$. Therefore $q_m(\theta^{(t)})+\varepsilon_{\mathrm U}=c(q_m^\star(\p)+\varepsilon_{\mathrm U})$. Summing over $m$ and using $\sum_m q_m(\theta^{(t)})=\sum_m q_m^\star(\p)=1$ yields $c=1$. The converse follows immediately from \eqref{eq:easurf}.
\end{proof}
Thus, the desired preference match is exactly the interior stationary condition of the weight update. This target-consistency result is not a convergence theorem. Finite-horizon behavior is assessed through the trajectories in Sec.~\ref{sec:experiments}.

\section{Experiments}
\label{sec:experiments}

\subsection{Speech LLM, tasks, and evaluation}

\noindent\textbf{Configuration and model.} We use the SLAM-LLM speech interface~\cite{ma2024embarrassingly} adopted by Granite-Speech~\cite{saon2025granite} and In-Sync~\cite{fan2026sync}, with a frozen ConfCTC encoder, a trainable Q-Former speech adapter~\cite{li2023blip}, and a frozen Granite-4.0-1B-Base causal LLM. Audio is sampled at 16~kHz, capped at 60~s, and converted to 80-bin log-Mel features. We train the speech adapter and rank-64 LoRA modules~\cite{hu2022lora}, with scaling 32 and dropout 0.1, for 81M trainable parameters. Training uses four NVIDIA H100 GPUs with a total batch size of 64.

\vspace{0.1cm}
\noindent\textbf{Training and validation data.} ASR training combines Common Voice and LibriSpeech~\cite{ardila2020common,panayotov2015librispeech}, VoxPopuli, Earnings-22, the AMI Meeting Corpus (AMI), CALLHOME, Switchboard, Fisher, and additional voicemail, multispeaker, and synthetic speech. TS uses forced-aligned subsets of these corpora. We discard samples whose alignment score between CTC forced alignment and GMM-HMM alignment from the Montreal Forced Aligner exceeds 175~ms~\cite{mcauliffe2017montreal}. Following In-Sync~\cite{fan2026sync}, targets interleave words with centisecond end times. ST uses nine Common Voice-derived translation sets spanning English-to-multilingual and multilingual-to-English directions. Validation draws from Multilingual LibriSpeech (MLS), Common Voice, LibriSpeech, VoxPopuli, AMI, and Earnings-22.

\vspace{0.1cm}
\noindent\textbf{Evaluation.} We evaluate ASR on seven held-out English conditions and TS on nine aligned conditions, including manually timestamped TIMIT~\cite{garofolo1993timit} and Buckeye~\cite{pitt2005buckeye}. TIMIT and Buckeye are excluded from training and therefore provide out-of-domain timestamp evaluations. ST uses thirteen directions from CoVoST~2 and FLEURS~\cite{wang2021covost,conneau2023fleurs}. We report unweighted macro-averages using word error rate (WER), accumulated averaging shift (AAS)~\cite{shi2022achieving}, and bilingual evaluation understudy (BLEU) computed with SacreBLEU~\cite{post2018call}. MOO performance is measured by PCE and endpoint-inclusive normalized hypervolume (HV)~\cite{zitzler1999multiobjective}. PCE is an internal preference-alignment metric because it uses the same training-objective share mismatch that drives EA-SURF. WER, AAS, BLEU, and HV assess held-out outcomes or trade-off coverage. For HV, we minimize $(\mathrm{WER}_{\mathrm{ASR}},\mathrm{AAS},-\mathrm{BLEU})$, normalize each coordinate by the ideal and nadir of the three shared anchors, and use the fixed reference point $(1,1,1)$. Lower WER, AAS, and PCE and higher BLEU and HV are better. All table entries are single runs. Reported means and population standard deviations aggregate the six requests and are not uncertainty estimates over independent reruns.

\begin{table}[t]
\centering
\begingroup
\renewcommand{\arraystretch}{1.06}
\setlength{\tabcolsep}{2.3pt}
\begin{tabular}{c|rr|rr}
\toprule
& \multicolumn{2}{c|}{Fixed LS} & \multicolumn{2}{c}{EA-SURF} \\
Preference & WER (\%) $\downarrow$ & AAS (ms) $\downarrow$ & WER (\%) $\downarrow$ & AAS (ms) $\downarrow$ \\
\midrule
$(.25,.75)$ & 7.37 & 32.8 & 7.52 & \textbf{30.6} \\
$(.50,.50)$ & 7.26 & 35.9 & 7.34 & \textbf{34.5} \\
$(.75,.25)$ & 7.24 & 41.7 & 7.30 & \textbf{40.1} \\
\bottomrule
\end{tabular}
\endgroup
\vspace{-0.3cm}
\caption{Bi-objective results from random initialization. The first two rows are fixed-LS specialists reused as EA-SURF anchors. The remaining rows compare the interior preferences.}
\label{tab:twoobj}
\vspace{-0.3cm}
\end{table}

\vspace{-0.2cm}
\subsection{Baselines and implementation}

\noindent\textbf{Inner training and sampling.} We estimate each task objective by averaging token-level cross-entropy over its active non-padding tokens, then combine tasks using \eqref{eq:ls}. Each minibatch draws at least one example from each active task's index pool with dataset-balance exponent 0.6. Zero-weight tasks are removed from the pool and loss when training specialist anchors. Inner training uses AdamW at $10^{-4}$ with zero weight decay, 1000 warm-up steps, gradient clipping at 10, and one gradient-accumulation step.

\vspace{0.1cm}
\noindent\textbf{Scalarization controls.} Fixed LS holds $\w$ constant in~\eqref{eq:ls}. As a two-objective Tchebysheff diagnostic~\cite{miettinen1999nonlinear}, we use the fixed utopia reference $z_m^\star=0$, select the task with the largest weighted gap $w_m(\ell_m(\theta)-z_m^\star)$ at each step, differentiate its unweighted gap, and add the augmentation $\rho\sum_m(\ell_m(\theta)-z_m^\star)$ with $\rho=10^{-3}$.

\vspace{0.1cm}
\noindent\textbf{Preference steering.} EA-SURF uses Algorithm~\ref{alg:easurf} with Euclidean distances in \eqref{eq:target-share}, $\alpha=1$, and $\varepsilon_{\mathrm U}=10^{-6}$. Each outer measurement averages task losses over 300 no-gradient training iterations per distributed rank, then averages across ranks to compute $\mathbf q(\theta^{(t)})$. Held-out metrics are evaluated only after round $T$. EA-SURF shares the sampler and optimizer family of fixed LS. %Its continuation learning-rate schedule differs from that of a single fixed-LS run, as detailed below.

\vspace{-0.2cm}
\subsection{Bi-objective steering diagnostic}

Table~\ref{tab:twoobj} provides a bi-objective ASR--timestamp diagnostic from random initialization rather than the primary compute-matched benchmark. Fixed LS trains for 110k steps. EA-SURF uses a 50k-step probe followed by $T=8$ rounds with $K_0=\cdots=K_7=10{,}000$, and reuses two 110k-step anchors across requests. Across the three interior preferences, EA-SURF spans 0.22 ASR-WER points and 9.5~ms AAS, versus 0.13 points and 8.9~ms for fixed LS. EA-SURF's training losses $(\ell_{\rm asr},\ell_{\rm ts})$ progress from $(.331,.845)$ through $(.279,.862)$ to $(.274,.923)$, confirming movement along the trade-off. A separate pretrained-model diagnostic at $\p=(.25,.75)$ compares LS with active-gap Tchebysheff: fixed LS improves ASR WER by 0.23 points and AAS by 0.6~ms. Aside from its small augmentation, the active-gap method differentiates only the largest task gap, using less multi-task gradient information per step. The average best PCE across the three EA-SURF trajectories is \(0.0250\), a \(59.6\%\) improvement over fixed LS at \(0.0618\).

\begin{table}[t]
\centering
\begingroup
\ninept
\renewcommand{\arraystretch}{1.06}
\setlength{\tabcolsep}{2pt}
\begin{tabular}{c|rr|rr|rr}
\toprule
& \multicolumn{2}{c|}{ASR WER (\%) $\downarrow$}
& \multicolumn{2}{c|}{AAS (ms) $\downarrow$}
& \multicolumn{2}{c}{BLEU $\uparrow$} \\
Request $\p$
& LS & EA
& LS & EA
& LS & EA \\
\midrule
$(.2,.2,.6)$
& 7.38 & 7.49
& 54.3 & \textbf{41.8}
& \textbf{32.89} & 32.54 \\

$(.2,.4,.4)$
& 7.31 & 7.59
& 43.0 & \textbf{35.2}
& \textbf{31.91} & 31.64 \\

$(.2,.6,.2)$
& 7.33 & 7.62
& 38.0 & \textbf{35.8}
& 29.54 & \textbf{31.71} \\

$(.4,.2,.4)$
& 7.20 & 7.24
& 50.9 & \textbf{38.8}
& \textbf{32.76} & 32.53 \\

$(.4,.4,.2)$
& 7.23 & 7.39
& 42.6 & \textbf{37.0}
& 30.04 & \textbf{31.39} \\

$(.6,.2,.2)$
& 7.14 & 7.21
& 51.0 & \textbf{46.6}
& 31.45 & \textbf{31.47} \\
\midrule
Mean $\Delta$
& \multicolumn{2}{c|}{$+0.16$}
& \multicolumn{2}{c|}{$\mathbf{-7.4}$}
& \multicolumn{2}{c}{$\mathbf{+0.45}$} \\
\bottomrule
\end{tabular}
\endgroup
\vspace{-0.3cm}
\caption{Three-objective results with 180k-step training budgets for both methods. EA denotes EA-SURF and $\Delta=\text{EA-SURF}-\text{LS}$.}
\label{tab:main}
\vspace{-0.3cm}
\end{table}

\subsection{Three-objective preference steering}
Fixed LS uses one 180k-step run for each request. Each EA-SURF run starts from its 100k-step LS checkpoint and uses $T=8$ with $K_0=\cdots=K_7=10{,}000$, for the same nominal 180k request-level budget. Three separately trained 80k-step specialists with permutations of $(.98,.01,.01)$ are reused across requests. We report the prespecified terminal $\theta^{(T)}$. The EA-SURF continuations are launched separately, so their learning-rate schedule histories differ from those of the single 180k-step fixed-LS runs.

Across the six requests in Table~\ref{tab:main}, fixed LS with $\w=\p$ spans 7.14--7.38\% ASR WER, 38.0--54.3~ms AAS, and 29.54--32.89 BLEU. Greater timestamp emphasis improves AAS, while greater translation emphasis improves BLEU, revealing their trade-off over a stable ASR range. EA-SURF reduces AAS for every request by 2.2--12.5~ms and lowers its mean from 46.6 to 39.2~ms, while increasing mean ASR WER by 0.16 points and mean BLEU by 0.45. EA-SURF is not designed to pointwise dominate fixed LS, but to reach a different operating point specified by specialist-relative training-objective shares, so moving toward the target can improve one held-out metric while worsening another. A larger $p_m$ likewise need not yield monotonic improvement in the corresponding downstream metric. Across all requests, the scalarization-to-share trajectories show no observed folding and each has a unique interior mismatch minimum, empirically supporting Assumption~\ref{ass:pf-regularity}.

As an internal steering metric, mean PCE decreases from $.240$ for fixed LS to $.091$ for EA-SURF, with population standard deviations $.119$ and $.038$, a 62.2\% reduction. For the endpoint-inclusive HV defined above, each six-request set is augmented with the shared anchors $(7.15,\allowbreak60.8,\allowbreak29.08)$, $(7.51,\allowbreak33.4,\allowbreak27.96)$, and $(7.79,\allowbreak62.1,\allowbreak33.68)$ in $(\mathrm{WER}_{\mathrm{ASR}},\allowbreak\mathrm{AAS},\allowbreak\mathrm{BLEU})$ order. Normalized HV increases from .519 to .612, or 17.8\%, indicating improved endpoint-inclusive trade-off coverage.

\vspace{-0.3cm}
\section{Conclusion}

\vspace{-0.1cm}
In this paper, we propose EA-SURF,   which steers multi-task speech LLMs without directly using the deployment preference as linear-scalarization weights. To achieve this separation, it uses one reusable anchor model per task, each trained predominantly for that task. Specifically, EA-SURF converts the preference into target proportions of the Euclidean distances between the current model’s training-loss vector and the anchor models’ loss vectors, then updates the weights based on the measured mismatch along a warm-started training trajectory. Under the stated regularity assumptions, each interior target has a unique supported realization, and the weight update is stationary exactly when the measured and target shares coincide. Empirically, across six three-objective requests, EA-SURF improves preference alignment, lowers mean AAS by 7.4~ms, and raises endpoint-inclusive normalized HV by 17.8\% over fixed LS. These results of EA-SURF demonstrate improved preference calibration and trade-off coverage compared to baselines.

\bibliographystyle{ICASSP2027_Paper_Templates/IEEEbib}
\bibliography{main}

\end{document}